\def\BibTeX{{\rm B\kern-.05em{\sc i\kern-.025em b}\kern-.08em
    T\kern-.1667em\lower.7ex\hbox{E}\kern-.125emX}}
\newcommand\extrafootertext[1]{%
    \bgroup
    \renewcommand\thefootnote{\fnsymbol{footnote}}%
    \renewcommand\thempfootnote{\fnsymbol{mpfootnote}}%
    \footnotetext[0]{#1}%
    \egroup
}
\newcommand{\nine}{9$\times$9\xspace}
\newcommand{\nineteen}{19$\times$19\xspace}
\newcommand{\leelascore}{LZ-Score\xspace}
\newcommand{\lzs}{LZS\xspace}
\newcommand{\MMM}{\mathcal{M}} 
\newcommand{\SSS}{\mathcal{S}} 
\newcommand{\AAA}{\mathcal{A}} 
\newcommand{\PPP}{\mathcal{P}} 
\newcommand{\EE}{\mathbb{E}}
\newcommand{\rscore}{r_\textnormal{score}}
\newcommand{\rwinrate}{r_\textnormal{outcome}}
\newcommand{\MMMscore}{\mathcal{M}_\textnormal{score}}
\newcommand{\MMMwinrate}{\mathcal{M}_\textnormal{outcome}}
\newcommand{\policyscore}{\pi_\textnormal{score}}
\newcommand{\policywinrate}{\pi_\textnormal{outcome}}
\newcommand{\vscore}{v^\textnormal{score}}
\newcommand{\vwinrate}{v^\textnormal{outcome}}
\newcommand{\qscore}{q^\textnormal{score}}
\newcommand{\qwinrate}{q^\textnormal{outcome}}
\DeclareMathOperator*{\argmax}{argmax}
\DeclareMathOperator{\Var}{Var}
\theoremstyle{plain}
\newtheorem{proposition}{Proposition}
\newtheoremstyle{bfnote}%
{}{}%
{\bfseries}{}%
{\bfseries}{.}%
{.5em}{}%
\theoremstyle{bfnote}
\newtheorem*{claim}{Claim}
\begin{document}

\title{Score vs.\ Winrate in Score-Based Games: which Reward for Reinforcement Learning?}


\author{
\IEEEauthorblockN{Luca Pasqualini}
\IEEEauthorblockA{\textit{University of Siena, Italy}\\
\href{mailto:pasqualini@diism.unisi.it}{pasqualini@diism.unisi.it}}
\and
\IEEEauthorblockN{Gianluca Amato}
\IEEEauthorblockA{\textit{University of Chieti-Pescara, Italy}\\
\href{mailto:gianluca.amato@unich.it}{gianluca.amato@unich.it}}
\and
\IEEEauthorblockN{Marco Fantozzi}
\IEEEauthorblockA{\textit{Italy}\\
\href{mailto:marco.fantozzi@gmail.com}{marco.fantozzi@gmail.com}}
\and
\IEEEauthorblockN{Rosa Gini}
\IEEEauthorblockA{\textit{ARS Toscana, Italy}\\
\href{mailto:rosa.gini@ars.toscana.it}{rosa.gini@ars.toscana.it}}
\and
\IEEEauthorblockN{Alessandro Marchetti$^1$}
\IEEEauthorblockA{\textit{University of Chieti-Pescara, Italy}\\
\href{mailto:alessandro.marchetti@unicampus.it}{alessandro.marchetti@unicampus.it}}
\and
\IEEEauthorblockN{Carlo Metta}
\IEEEauthorblockA{\textit{ISTI-CNR Pisa, Italy}\\
\href{mailto:carlo.metta@isti.cnr.it}{carlo.metta@isti.cnr.it}}
\and
\IEEEauthorblockN{Francesco Morandin}
\IEEEauthorblockA{\textit{University of Parma, Italy}\\
\href{mailto:francesco.morandin@unipr.it}{francesco.morandin@unipr.it}}
\and
\IEEEauthorblockN{Maurizio Parton}
\IEEEauthorblockA{\textit{University of Chieti-Pescara, Italy}\\
\href{mailto:maurizio.parton@unich.it}{maurizio.parton@unich.it}}
}

\maketitle

\extrafootertext{Funded by INdAM groups GNSAGA, GNCS and GNAMPA. Computational resources provided by CLAI lab of Chieti-Pescara.}
\footnotetext[1]{Alessandro Marchetti is a PhD student enrolled in the National PhD in Artificial Intelligence, XXXVII cycle, course on Health and life sciences, organized by Campus Bio Medico University of Rome.}

\begin{abstract}

    In the last years, the DeepMind algorithm AlphaZero has become the state of the art to efficiently tackle perfect information two-player zero-sum games with a win/lose outcome.  However,  when the win/lose outcome is decided by a final score difference, AlphaZero may play score-suboptimal moves because all winning final positions are equivalent from the win/lose outcome perspective.  This can be an issue, for instance when used for teaching, or when trying to understand whether there is a better move.  Moreover, there is the theoretical quest for the perfect game.  A naive approach would be training an AlphaZero-like agent to predict score differences instead of win/lose outcomes. Since the game of Go is deterministic, this should as well produce an outcome-optimal play. However, it is a folklore belief that ``this does not work''.
    
    In this paper, we first provide empirical evidence for this belief. We then give a theoretical interpretation of this suboptimality in general perfect information two-player zero-sum game where the complexity of a game like Go is replaced by the randomness of the environment. We show that an outcome-optimal policy has a different preference for uncertainty when it is winning or losing. In particular, when in a losing state, an outcome-optimal agent chooses actions leading to a higher score variance. We then posit that when approximation is involved, a deterministic game behaves like a nondeterministic game, where the score variance is modeled by how uncertain the position is. We validate this hypothesis in AlphaZero-like software with a human expert.
\end{abstract}

\begin{IEEEkeywords}
reinforcement learning, AlphaZero-like algorithms, score-based games
\end{IEEEkeywords}


\section{Introduction}
\label{sec:intro}

The game of Go has been a landmark challenge for AI research since its very beginning.  It is very suited to AI, with the importance of patterns and the need for deep exploration, and very tough to actually solve, with its whole-board features and subtle interdependencies of local situations. Nowadays, AI has reached a superhuman level in the game of Go with the well-known DeepMind algorithm AlphaGo~Zero~\cite{AlphaGoZero} (AGZ), a zero-knowledge evolution of AlphaGo~\cite{AlphaGo} (AG).  A more general version of the algorithm, AlphaZero~\cite{AlphaZero} (AZ), might even be able to tackle efficiently the whole class of perfect information two-player zero-sum games.

In perfect information two-player zero-sum games where the win/lose outcome is given by a final score difference and no other rewards during the game, maximizing this score difference is still an open and important question, see the detailed discussion in \cite[Introduction]{sai7x7}. In fact, AZ-like algorithms play suboptimal moves in the endgame, see for instance \cite[moves 210 and 214, page 252]{invisible}. The open-source clean room implementation of AGZ known as Leela Zero \cite{LeelaZero} (LZ) is also known to play suboptimal moves, see Section 4.4 in \cite{SAI9x9ECAI}.

This phenomenon is rooted in the win/lose reward in the reinforcement learning (RL) pipeline. Giving a reward of 1 (win) or 0 (lose) at the end of the game means that agents maximize just the winrate (that is, the expected win/lose outcome) and could play sub-optimally with respect to the expected score difference. It is a folklore belief that replicating the AZ pipeline using the score difference as a primary target, instead of the Boolean outcome, is unsuccessful.  A qualitative argument is that the score difference is unlikely to be a successful reward because without knowledge of the win/lose outcome, the agent will give the same importance to each score point.  But when the score difference is close to zero, a single point may change the outcome of the game, thus inducing instability in the agent strength. Note that this only happens because the perfect play is out of reach: if the training was to reach a perfect play, win/lose and score difference rewards would produce agents that are equivalent from the win/lose point of view, because the game of Go is deterministic.

As a matter of fact, there are at least two different RL approaches that proved somewhat successful in maximizing the score: KataGo \cite{KataGo} and SAI~\cite{SAI9x9ECAI}.  KataGo does include score estimation, but only as a secondary target: the value to be maximized is a linear combination of winrate and expectation of a nonlinear function of the score difference, not the score difference itself. In SAI, the winrate is modeled as a two parameters family of sigmoids $\sigma_{\alpha,\beta}$: while $\alpha$ can be seen as the final score difference, $\alpha$ and $\beta$ are learned indirectly by training $\sigma_{\alpha,\beta}$ against the classical Boolean reward.

Still, humans do use score estimations instead of win/lose outcome estimations while playing score-based games. The research question we address in this paper is to what extent a win/lose-based optimal play can be achieved by a score-based optimal play. In simpler words: if I play to maximize the score in a win/lose game, do I lose more often? 



This very question has different answers in different cases that are better-detailed in~\ref{sec:conceptual_framework}.  In a deterministic game sufficiently simple to allow for exact computation of the optimal play, maximizing the expected score difference yields also optimal win/lose-based play.  However, if the game is nondeterministic, maximizing the expected score difference does not always give optimality from the point of view of the winrate (even at optimal play).  We elaborate on this unexpected behavior and show that score variance is the key to understanding this suboptimality, see~\ref{sec:experimentsMDP}.  In a game that is deterministic but so complex that optimal play can only be roughly approximated, we support the mentioned folklore belief by training \leelascore (\lzs), an instance of LZ on the \nine board, using score difference as a target.  We show that the training is successful, see \ref{subsec:experiments_human}, but converges prematurely to a player weaker than a corresponding AGZ-like player, see \ref{subsec:experiments_sai}. Finally, in \ref{sec:background}, we detail the creation of \lzs.


\section{Conceptual framework}
\label{sec:conceptual_framework}


We consider two fully competitive agents playing in a perfect information finite sequential game, whose win/lose outcome (hereafter, simply \emph{outcome}) is decided solely by a final score difference. The game can be deterministic, with no chance involved like the game of Go, or nondeterministic with chance events, like backgammon. The agents can be score-based or outcome-based, according to whether they minimize/maximize the expected score difference or the expected win/lose outcome (hereafter, simply \emph{winrate}). For each agent, we call \emph{minimax optimal} any (it is not necessarily unique) theoretical optimal policy for that agent's target.  A backward induction argument shows that, in deterministic games, any minimax-optimal policy for the score-based agent would be also minimax optimal for the outcome-based agent, that is, to win in a win/lose game one can actually maximize the score.


However, in deterministic but very complex games where optimal policy is out of reach, maximizing the score may still lead to a suboptimal play from the winrate perspective. We empirically validate this claim by training \leelascore (\lzs), an AZ-like score-based agent, and by evaluating its performance with a human-in-the-loop and a quantitative approach, see~\ref{sec:experiments}.

One interpretation of this incoherence between score-based and outcome-based play is that in complex deterministic games, the value approximation is far from being optimal, and therefore the agent plays in partial ignorance. Partial ignorance can be modeled with the fact that, to the agent's eyes, the game is nondeterministic, and in this case, a minimax-optimal policy for the score-based agents is no longer guaranteed to be minimax optimal for the outcome-based agents.
In order to understand this phenomenon, we build a family of nondeterministic games where the minimax-optimal policy for the score-based agents is not optimal for the outcome-based agents, and elaborate that score variance is the key statistics to understand this behavior, see~\ref{sec:experimentsMDP}.


\section{Evaluation}
\label{sec:experiments}



\subsection{Qualitative evaluation against a human player}
\label{subsec:experiments_human}

Fifteen games were played between the best \lzs network and Carlo Metta, a strong amateur player\footnote{Player profile on EGD, the European Go Database: \url{https://www.europeangodatabase.eu/EGD/Player_Card.php?&key=14713996.}}.
Ten games were played with 400 visits, that is, the same setting as games in the quantitative evaluation described in~\ref{subsec:experiments_sai}, while five games were played with 20,000 visits, to test a stronger player.

A thorough analysis of such games shows that training has been successful in producing a consistent player, which, however, exhibits some unusual characteristics when compared to other artificial agents.
The match ended with a score of 14-1 in favor of the human player: although \lzs found itself in a position of clear advantage several times, it was only able to win one game, one of those with 20,000 visits.
\lzs showed some peculiar and not always desirable features. \lzs certainly has a direct and aggressive style. It does not seem to admit sacrificing a few stones for better final results, e.g.\ move 17 in game 6, nor to foresee sacrifice on the opponent's side, see e.g.\ move 16 in game 9. This is clearly in contrast with the flexibility shown by other artificial agents.

Another striking situation occurred several times: when in balanced positions, \lzs attempted to further increase the score difference, rather than settling for a narrow victory, in such an aggressive and self-delusional way that it resulted in an inevitable defeat. See for instance move 21 in game 8. It may be argued that this phenomenon was a direct effect of the \lzs training scheme.

\subsection{Quantitative evaluation against SAI}
\label{subsec:experiments_sai}

%

To estimate the Elo rating for \lzs networks promoted in training, we compared one every four of them against a calibration panel of 32 SAI networks, whose Elo ranged from 683 to 3501, with 400 visits.  Elo ratings were then estimated by maximum likelihood, as in R\'emi Coulom's Bayes Elo \cite{BayesElo}.


In Fig.~\ref{fig:calibrated_elo_sai29} we compare \lzs with run 1 of SAI, trained for a comparable number of self-plays, with the same network architecture and weights. see \cite{SAI9x9ECAI}.
This comparison shows that during training \lzs had consistently lower values of Elo. The last increase in the size of the network, at 900,000 games, did not yield any relevant improvement in the following 400,000 games, thus confirming that \lzs was converging prematurely to a weaker player. After 200,000 games the two curves are approximately parallel, with an Elo difference of around 1,500 points.  This is a remarkable difference in strength: the interpretation of the Elo formula means that, at the same level of training, \lzs would win against SAI with probability $10^{-(1500/400)}$, i.e. less than once every 5000 games.


\begin{figure}[t]
    \centering
    \includegraphics[scale=0.6]{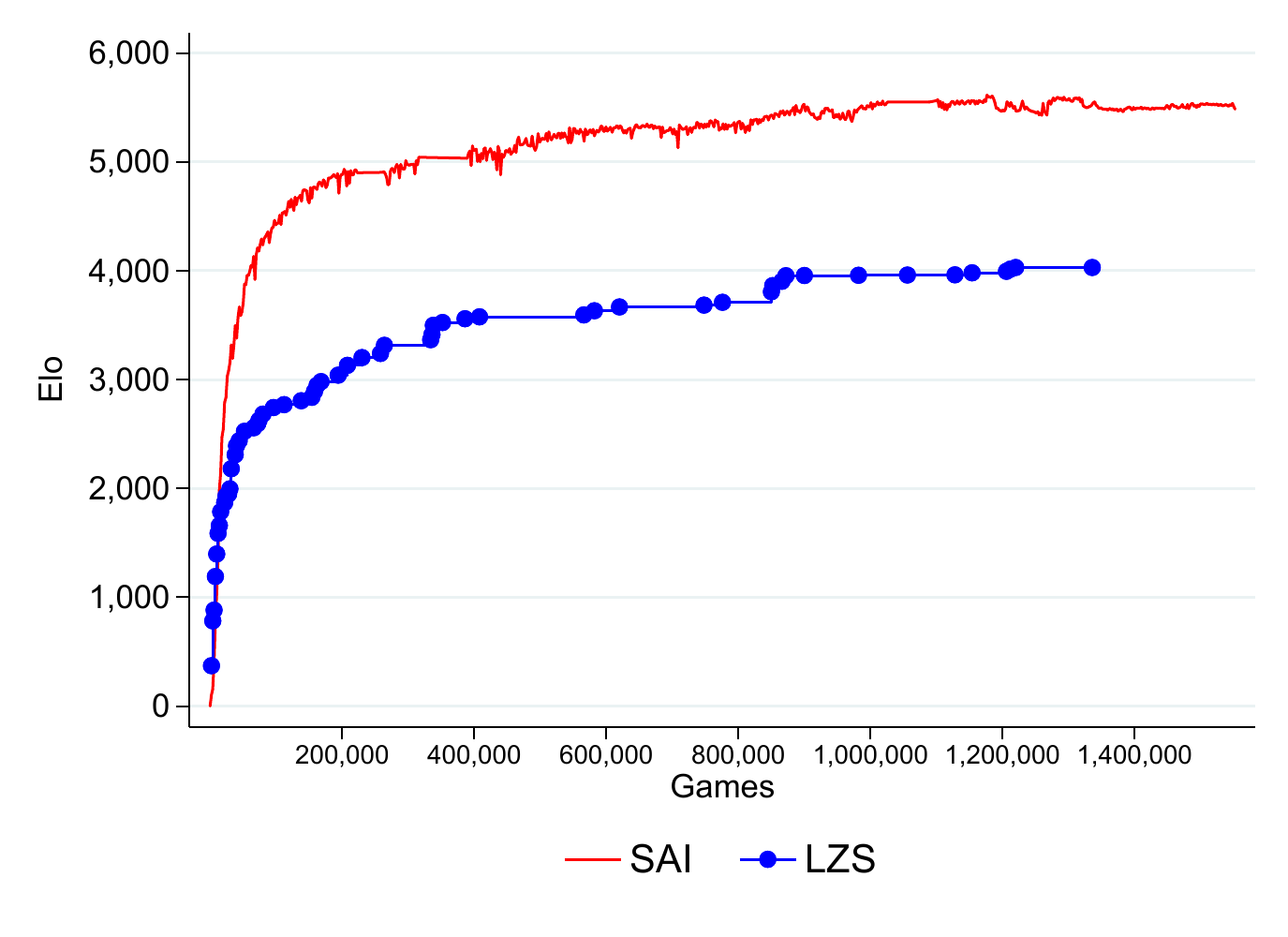}
    \caption{Elo ratings of all the promoted networks from \lzs and from run 1 of SAI \nine, as a function of the cumulative number of self-play games.}
    \label{fig:calibrated_elo_sai29}
\end{figure}

\section{Suboptimality of score-optimal policy in a win/lose outcome setting}\label{sec:experimentsMDP}

In this section we give a theoretical interpretation of the suboptimality of maximizing the score, considering two much simpler settings in which the complexity is replaced by the randomness.
In both settings, we show that suboptimality is a common behavior, and it is associated with the fact that the outcome-optimal policy has a different preference for uncertainty when it is winning or losing. When in a losing state, an outcome-optimal agent chooses actions leading to a higher score variance. According to strong human players, this is in fact what human players do too.

In the rest of this section, we consider a single agent instead of two fully-competitive agents.  By the analysis in \cite[Section 4.3]{Littman:1994}, this is not a restrictive hypothesis.




\subsection{Multi-armed bandit}

We start by proving that in the multi-armed bandit case, score-based optimality does not imply outcome-based optimality.

\begin{proposition}\label{prop:bandit}
    Consider a 2-armed bandit where each action $i \in \{ 1, 2 \}$ yields a random
	\emph{score} reward $\rscore(i) \in \mathbb{R}$, with Gaussian distribution $\mathcal{N}
	(\mu_i, \sigma_i^2)$ and an \emph{outcome} reward $\rwinrate(i) \in \{ 0, 1 \}$
	defined as $\rwinrate(i) := \mathbbm{1}_{\rscore(i) > 0}$.  Then there are choices of the parameters for which:
	\begin{equation}
	\label{eq:bandit_no_agree}
	    \mathbb{E}[\rscore(1)]>\mathbb{E}[\rscore(2)],
	    \mathbb{E}[\rwinrate(1)]<\mathbb{E}[\rwinrate(2)]
	\end{equation}
	Indeed this happens if and only if $\mu_1>\mu_2 > 0$ and $\sigma_1 > \sigma_2 \frac{\mu_1}{\mu_2}$, or $\mu_2<\mu_1 < 0$ and $\sigma_1 < \sigma_2\frac{\mu_1}{\mu_2}$.
\end{proposition}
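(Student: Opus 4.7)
The plan is first to reduce the proposition to a pair of explicit inequalities on the parameters $(\mu_1,\mu_2,\sigma_1,\sigma_2)$, and then to analyze these inequalities by cases on the signs of $\mu_1,\mu_2$. Since the score reward is Gaussian, one immediately has $\mathbb{E}[\rscore(i)]=\mu_i$, so the first condition in \eqref{eq:bandit_no_agree} is just $\mu_1>\mu_2$. For the outcome reward, since $\rwinrate(i)=\ind{\rscore(i)>0}$, one computes
\begin{equation*}
\mathbb{E}[\rwinrate(i)]=\PP(\rscore(i)>0)=\PP\!\left(Z>-\tfrac{\mu_i}{\sigma_i}\right)=\Phi\!\left(\tfrac{\mu_i}{\sigma_i}\right),
\end{equation*}
where $Z$ is a standard normal and $\Phi$ its CDF. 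Since $\Phi$ is strictly increasing, the second condition in \eqref{eq:bandit_no_agree} is equivalent to $\mu_1/\sigma_1<\mu_2/\sigma_2$, i.e.\ to $\mu_1\sigma_2<\mu_2\sigma_1$.

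Next I would combine the two inequalities $\mu_1>\mu_2$ and $\mu_1\sigma_2<\mu_2\sigma_1$ and characterize when they can hold simultaneously, splitting on the sign of $\mu_2$. If $\mu_2>0$, then $\mu_1>\mu_2>0$, and dividing $\mu_1\sigma_2<\mu_2\sigma_1$ by the positive quantity $\mu_2$ yields $\sigma_1>\sigma_2\,\mu_1/\mu_2$, which is the first case of the proposition. If $\mu_2=0$, then $\mu_1>0$ forces $\mu_1\sigma_2>0=\mu_2\sigma_1$, contradicting the second inequality. If $\mu_2<0$, I first rule out $\mu_1\ge 0$: in that case $\mu_1\sigma_2\ge 0>\mu_2\sigma_1$, again a contradiction. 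Hence $\mu_2<\mu_1<0$, and dividing by the negative quantity $\mu_2$ flips the inequality to $\sigma_1<\sigma_2\,\mu_1/\mu_2$, giving the second case.

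Finally, to complete the ``there exist choices of parameters'' part of the statement, I would exhibit a concrete numerical example in each case (for instance $\mu_1=2,\mu_2=1,\sigma_1=10,\sigma_2=1$ for the positive regime, and its mirror image for the negative regime), and verify numerically that \eqref{eq:bandit_no_agree} holds. The equivalence ``indeed, this happens if and only if\ldots'' then follows by combining the necessity established in the case analysis with the sufficiency shown by reversing those algebraic manipulations.

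The hard part here is not conceptual but bookkeeping: one must be careful about the direction of inequalities when multiplying or dividing by a possibly-negative $\mu_2$, and one must explicitly exclude the degenerate case $\mu_2=0$ and the mixed-sign case $\mu_2<0<\mu_1$. Everything else is a direct computation using the standard normal CDF and the monotonicity of $\Phi$.
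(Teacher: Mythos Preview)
Your proposal is correct and follows essentially the same approach as the paper: compute $\mathbb{E}[\rwinrate(i)]=\Phi(\mu_i/\sigma_i)$, use the strict monotonicity of $\Phi$ to reduce \eqref{eq:bandit_no_agree} to $\mu_1>\mu_2$ together with $\mu_1/\sigma_1<\mu_2/\sigma_2$, and then perform a sign analysis on $\mu_1,\mu_2$. Your version is in fact more explicit than the paper's, which compresses the case analysis into the single sentence ``this implies that $\mu_1$ and $\mu_2$ must have the same sign, and from that we deduce the thesis'' and omits both the sufficiency direction and the concrete examples you provide.
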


\begin{proof}
    Compute $\mathbb{E}[\rwinrate(i)] = P (\rscore(i) > 0) = \Phi( \frac{\mu_i}{\sigma_i})$, where $\Phi$ denotes the cumulative distribution function. By the monotonicity of $\Phi$ and assuming~\eqref{eq:bandit_no_agree} we get $\mu_1>\mu_2$ and 
    $\frac{\mu_1}{\sigma_1} < \frac{\mu_2}{\sigma_2}$.  This implies that $\mu_1$ and $\mu_2$ must have the same sign, and from that, we deduce the thesis.
\end{proof}

We can interpret Proposition~\ref{prop:bandit} as a selective preference for lower or higher variance: if the two rewards have different optimal actions, then either the environment is ``winning'' (positive $\mu_i$'s and over 50\% probabilities of outcome 1), thus $\sigma_1 > \sigma_2$, and the \emph{outcome} reward prefers the lower variance action, or the environment is ``losing'', thus $\sigma_1 < \sigma_2$, and it prefers the higher variance action.

\subsection{Suboptimality in nondeterministic MDPs}

More generally, let $\MMM:=(\SSS,\AAA,r,\PPP)$ be a finite, episodic, and tabular Markov Decision Process (MDP), where $\SSS$ is the state space, $\AAA$ is the action space, $r$ is the reward function and $\PPP:\SSS\times\AAA\to\Delta(\SSS)$ is the transition model. To model a game where the win/lose outcome is given by a final score,
we make the additional assumption that $\SSS$ contains a subset of ``leaves'' $\SSS_L$, that is, states $s$ from which every action $a$ takes to the terminal state with probability $1$ and reward $r(s)$ depending on $s$ but not on $a$. The reward is received only at the end of the game: $r(s)=0$ for all $s\in\SSS-\SSS_L$. We denote by $\rscore$ this \emph{score} reward, and by $\MMMscore:=(\SSS,\AAA,\rscore,\PPP)$ the game where the reward is given by the score.

Correspondingly, $\MMMwinrate:=(\SSS,\AAA,\rwinrate,\PPP)$ denotes a game with the same state space, action space, and transitions as $\MMMscore$, but with a win/lose outcome. Here $\rwinrate$ is related to $\rscore$ by $\rwinrate(s) := \mathbbm{1}_{\rscore(s) > 0}$.

Any policy $\pi:\SSS\rightarrow\text{PD}(\AAA)$ induces value functions on $\MMMscore$ and $\MMMwinrate$. We denote these value functions by $\vscore_\pi$, $\vwinrate_\pi$,  $\qscore_\pi$ and $\qwinrate_\pi$. In particular, if $\policyscore$ and $\policywinrate$ are optimal policies for $\MMMscore$ and $\MMMwinrate$ respectively, then $\vwinrate_{\policyscore}(s)$ represents the probability of winning, starting from $s$ and following a score-based optimal play, in the game $\MMMwinrate$. With this notation, our research question becomes whether strict inequality may sometimes hold in
\begin{equation}
\label{eq:value_ineq}
    \vwinrate_{\policyscore}(s)
    \leq\vwinrate_{\policywinrate}(s)
    =\sup_\pi\vwinrate_{\pi}(s).
\end{equation}

\begin{claim}
In a nondeterministic MDP, $\policyscore$ can be suboptimal in $\MMMwinrate$.
\end{claim}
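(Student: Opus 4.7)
The plan is to prove the claim constructively by exhibiting an explicit finite, episodic, tabular nondeterministic MDP that fits the framework defined just above the claim and for which $\policyscore$ is strictly suboptimal in $\MMMwinrate$. Since Proposition~\ref{prop:bandit} already produces the same dissociation between expected score and expected outcome in the one-state bandit setting, my strategy is essentially to discretise that bandit and lift it into an MDP: take a single non-leaf initial state $s_0$, two actions $a_1, a_2$ available from $s_0$, and four leaf states $\ell_1^+, \ell_1^-, \ell_2^+, \ell_2^- \in \SSS_L$. Each action $a_i$ transitions from $s_0$ to $\ell_i^+$ or $\ell_i^-$ with prescribed probabilities; at each leaf, every action deterministically reaches the terminal state paying the leaf's fixed reward. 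This respects the requirement that $r(s)=0$ outside $\SSS_L$ and that the reward at a leaf depends only on the state.

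The key step is a concrete numerical choice for which the two actions rank oppositely under $\qscore(s_0,\cdot)$ and $\qwinrate(s_0,\cdot)$. A convenient choice is: $a_1$ leads to reward $+10$ with probability $0.4$ and to reward $-1$ with probability $0.6$, while $a_2$ leads to $+1$ with probability $0.9$ and to $-5$ with probability $0.1$. A direct computation then gives $\qscore(s_0,a_1)=3.4>0.4=\qscore(s_0,a_2)$ in $\MMMscore$, but $\qwinrate(s_0,a_1)=0.4<0.9=\qwinrate(s_0,a_2)$ in $\MMMwinrate$, because $\qwinrate(s_0,a_i)$ coincides with the probability that $a_i$ lands in a leaf of positive score. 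Therefore any $\policyscore$ must select $a_1$ at $s_0$, while $\policywinrate$ selects $a_2$, so inequality~\eqref{eq:value_ineq} is strict at $s=s_0$, which proves the claim.

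I do not expect any genuine mathematical obstacle; the only step requiring real care is bookkeeping, namely verifying that the example formally satisfies every clause of the MDP framework (leaves having a well-defined reward, leaf actions being absorbing, no intermediate rewards, etc.) and that the policy values computed agree with the abstract definitions of $\qscore, \qwinrate, \vscore, \vwinrate$. As a sanity check aligned with the discussion after Proposition~\ref{prop:bandit}, note that in my example $a_1$ has both the higher mean and the strictly larger score variance, while the winrate at $s_0$ under $\policywinrate$ exceeds $1/2$; hence the outcome-optimal policy is avoiding the higher-variance arm in a ``winning'' state, consistent with the variance-preference heuristic the paper emphasises.
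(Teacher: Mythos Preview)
Your proof is correct: the explicit two-action, four-leaf MDP you construct satisfies every structural clause of the paper's framework, and the direct computation $\qscore(s_0,a_1)=3.4>0.4=\qscore(s_0,a_2)$ together with $\qwinrate(s_0,a_1)=0.4<0.9=\qwinrate(s_0,a_2)$ forces any $\policyscore$ to select $a_1$ and any $\policywinrate$ to select $a_2$, giving strict inequality in~\eqref{eq:value_ineq} at $s_0$. Your variance sanity check is also arithmetically correct and aligned with the paper's heuristic.

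Your route, however, is genuinely different from the paper's. Rather than a single hand-built example, the paper introduces a parametric family of \emph{action-shared tree} MDPs (branching $b$, depth $d$, random integer leaf rewards, Dirichlet transition probabilities), computes $\policyscore$, $\policywinrate$ and the relevant value functions by value iteration and policy evaluation across thousands of random instances, and then reports the average gap $\vwinrate_{\policywinrate}(s)-\vwinrate_{\policyscore}(s)$ by state level (Fig.~\ref{fig:winrate_difference}) to exhibit strict inequality. Your argument is more elementary and fully self-contained: it settles the bare existence claim in a few lines of arithmetic and, being essentially a discretised version of the bandit in Proposition~\ref{prop:bandit}, makes that connection transparent. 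The paper's heavier approach buys more than the claim itself: it indicates that the phenomenon is typical rather than pathological, quantifies how the winrate loss varies with depth, and---crucially---provides the experimental testbed immediately reused for the variance-preference analysis in Fig.~\ref{fig:variance_difference}, which a one-state example could not support.
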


To prove this claim, we exhibit a class of MDPs that are exactly computable and for which the strict inequality can be numerically verified.  We consider tree-like MDPs with two parameters $b$ (state branch) and $d$ (depth), that we call \emph{action-shared tree} MDPs (an example in Fig.~\ref{fig:action_shared_tree_mdp}). More precisely, these trees are defined as follows: nonterminal states are partitioned by \emph{levels} $\ell=0,\dots,d$, with $b^\ell$ states at level $\ell$.  Each state $s$ has a unique set $C_s$ of $b$ ``children'' states at level $\ell+1$ to which one can transition, independently of the action.  From the $b^d$ leaves at level $d$, every action $a$ takes to the terminal state with probability $1$ and a reward independent of $a$.

\begin{figure}
\centering
\includegraphics[scale=0.2]{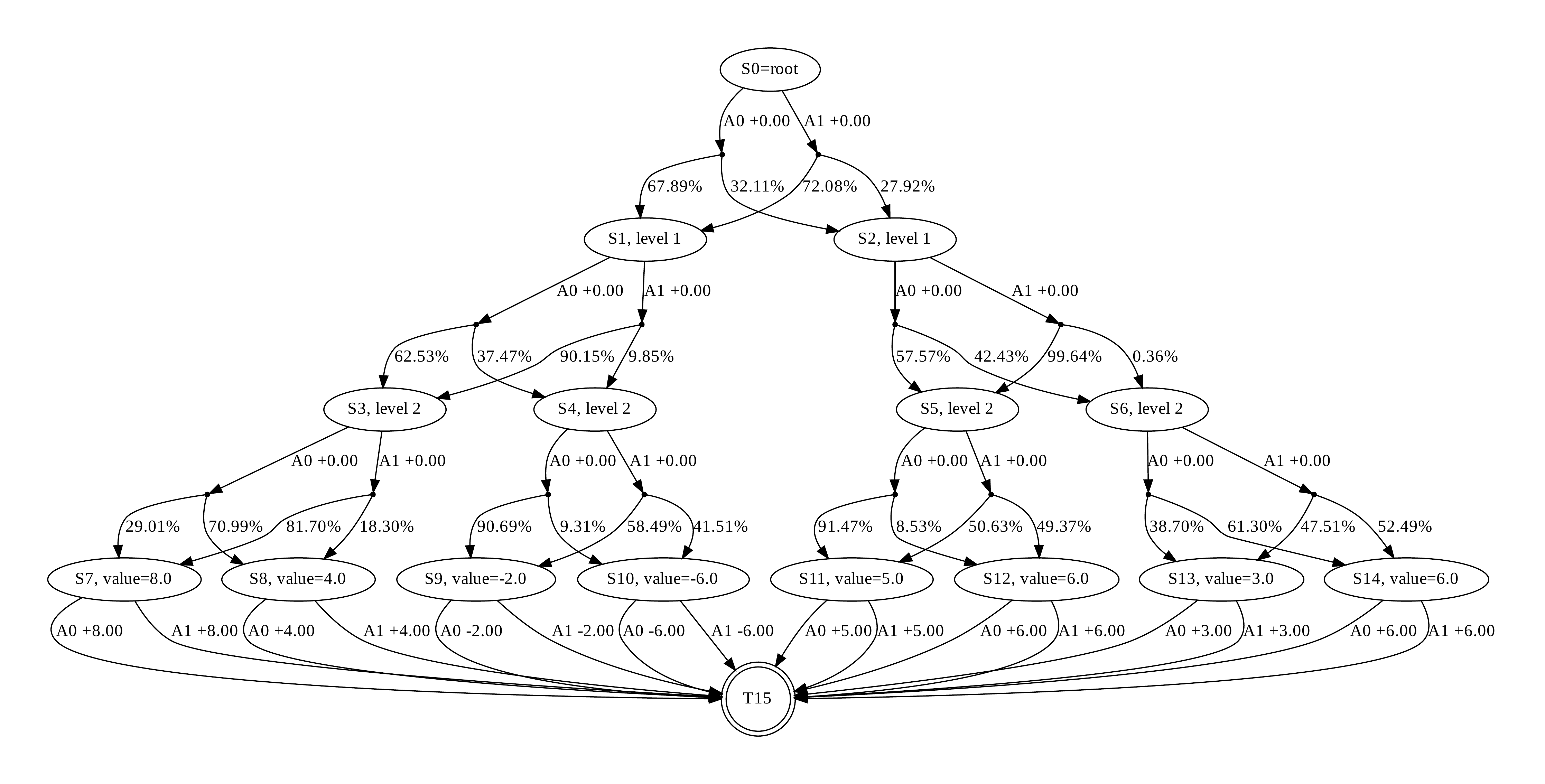}
\caption{A randomly generated action-shared tree MDP with state branch $b=2$, depth $d=3$ and 2 actions for each state. Numbers on edges denote rewards (nonzero rewards only on leaves) and transitions.}\label{fig:action_shared_tree_mdp}
\end{figure}

In our numerical experiments, we fixed the state branch $b$ and the depth $d$, and then we randomly generated the remaining parameters (all independently): the rewards $\rscore(s)$ for all leaves $s$, as integers taken uniformly in $[-b^d,b^d]$; the transition probabilities $(p(s'|s,a))_{s'\in C_s}$ for all nonterminal $(s,a)$, from a $b$-dimensional Dirichlet distribution with all parameters equal to 1, and $p(\cdot|s,a)=0$ outside $C_s$.

For each MDP so generated, we computed by value iteration the optimal policies $\policyscore, \policywinrate$ and the optimal value function $\vwinrate_{\policywinrate}$. Then, we computed $\vwinrate_{\policyscore}$ by policy evaluation.
We finally obtained the average of the difference $\vwinrate_{\policywinrate}(s)-\vwinrate_{\policyscore}(s)$ as a function of the level of the state, across the states of each MDP and across all the generated MDPs.  The results are depicted in Fig.~\ref{fig:winrate_difference} and show that there are MDPs and states for which~\eqref{eq:value_ineq} holds with the strict inequality, thus proving the claim.

\begin{figure}
		\hfil\includegraphics[scale=0.45]{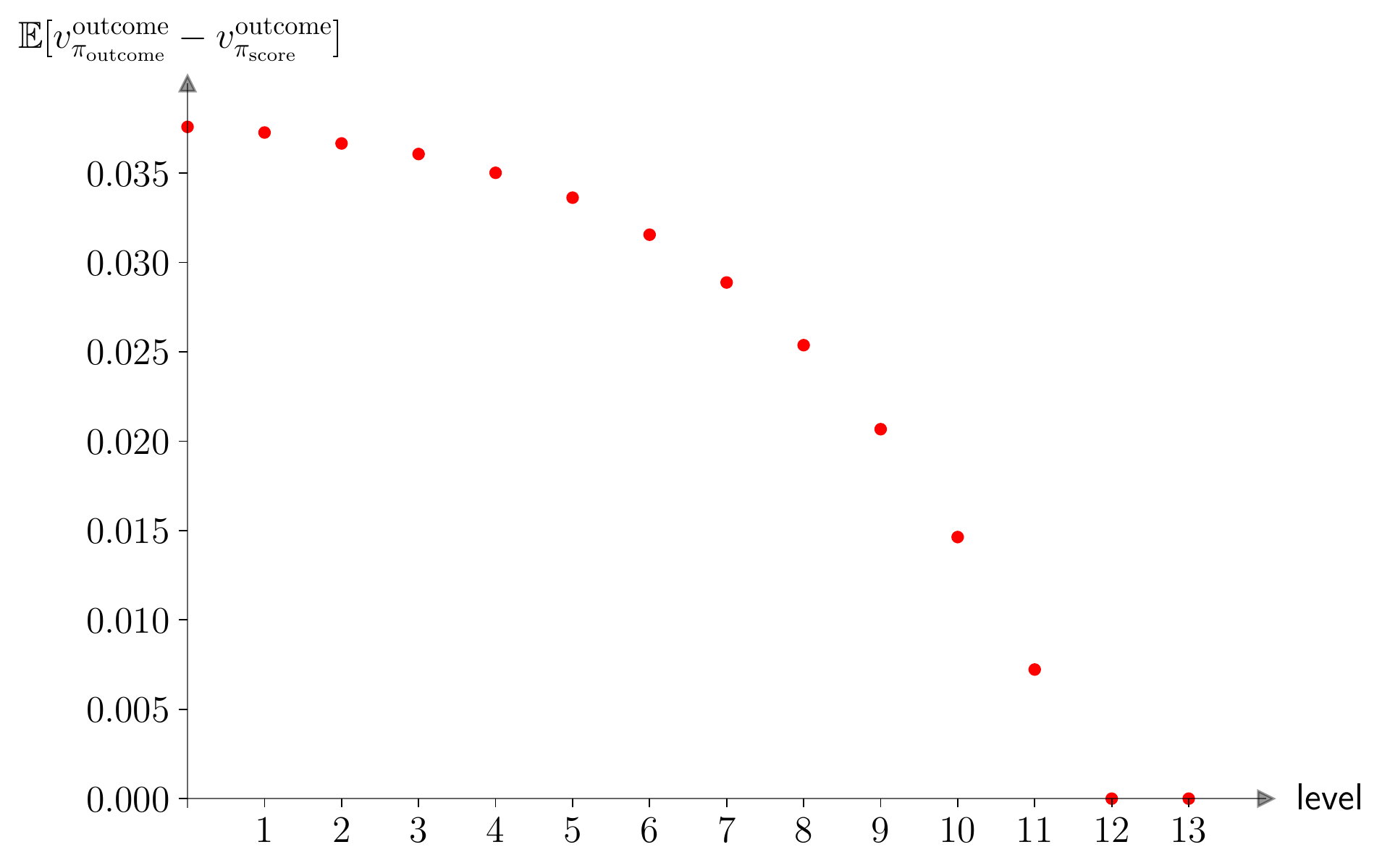}\hfil
	\caption{
	The average loss on 2,000 runs in the outcome value function $\vwinrate$ passing from the outcome-optimal policy to the score-optimal policy, as a function of the states level.}\label{fig:winrate_difference}
\end{figure}

\subsection{Fondness for the variance depends on the odds}

As seen for the multi-armed bandit setting, when the score-optimal policy $\policyscore$ and the win/lose outcome-optimal policy $\policywinrate$ differ, it is possible to interpret the differences in terms of variances of the scores of the available actions. A similar interpretation is possible in the more general MDP setting, as stated in the following claim.

\begin{claim}
When $\policywinrate$ is losing, it prefers actions leading to a higher score variance in $\MMMscore$, and vice versa when it is winning.
\end{claim}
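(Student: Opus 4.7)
The plan is to lift the bandit-level reasoning of Proposition~1 to a local, state-by-state MDP statement. Fix a state $s$ where $a^* := \policywinrate(s)$ differs from $a^\dagger := \policyscore(s)$, let $X_a$ denote the random final score obtained by playing $a$ at $s$ and following the relevant optimal continuation, and write $\mu_a := \EE[X_a]$ and $\sigma_a^2 := \Var(X_a)$. The two optimality conditions give
$$
\mu_{a^\dagger} \ \geq\ \mu_{a^*}, \qquad \PP(X_{a^*} > 0) \ \geq\ \PP(X_{a^\dagger} > 0).
$$
Reading ``winning at $s$'' as $\vwinrate_{\policywinrate}(s) > \tfrac{1}{2}$ (so $\mu_{a^*}$ and $\mu_{a^\dagger}$ are both plausibly positive) and ``losing'' as the reverse, the goal is to show that the losing case forces $\sigma_{a^*} > \sigma_{a^\dagger}$ and the winning case the opposite.

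The cleanest way forward is via a Gaussian surrogate: replace $X_a$ by $\mathcal{N}(\mu_a, \sigma_a^2)$ so that $\PP(X_a > 0) = \Phi(\mu_a/\sigma_a)$. Then, by the same monotonicity argument used in the proof of Proposition~1, the two inequalities above deliver $\mu_{a^*}/\sigma_{a^*} \geq \mu_{a^\dagger}/\sigma_{a^\dagger}$ together with $\mu_{a^\dagger} \geq \mu_{a^*}$, and a case split on the common sign of the means yields the claim: when both means are positive one obtains $\sigma_{a^*} \leq \sigma_{a^\dagger}\, \mu_{a^*}/\mu_{a^\dagger} < \sigma_{a^\dagger}$, and the symmetric computation in the negative-mean case gives $\sigma_{a^*} > \sigma_{a^\dagger}$. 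The Gaussian reduction is heuristically justified because in deep action-shared trees the final score aggregates many nearly independent transition-level fluctuations, so a central-limit-type approximation should sharpen with depth $d$; this can be backed up numerically on the same simulations already used for Fig.~\ref{fig:winrate_difference}, by plotting $\sigma_{a^*} - \sigma_{a^\dagger}$ stratified by the sign of $\mu_{a^*}$.

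The main obstacle is the mismatch between the continuation policies defining $\qscore$ and $\qwinrate$: the quantity $\qwinrate_{\policywinrate}(s,a)$ is computed under a $\policywinrate$-continuation while $\qscore_{\policyscore}(s,a)$ uses a $\policyscore$-continuation, so the two inequalities above live a priori in different probabilistic worlds and cannot be coupled directly. I would circumvent this by first proving a local, one-step version of the claim that fixes a single continuation policy $\pi$ and compares the distributions of the next-state score and outcome values across actions $a\in\AAA$ from $S' \sim \PPP(\cdot\mid s,a)$, and then propagating the statement from the leaves upward along the action-shared tree, using that at levels where $\policyscore$ and $\policywinrate$ already coincide there is nothing to prove. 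A secondary caveat, worth flagging explicitly, is the mixed-sign case in which $\mu_{a^*}$ and $\mu_{a^\dagger}$ have opposite signs: there no variance-only interpretation survives and the claim should be restricted to states that do not straddle the zero-score boundary.
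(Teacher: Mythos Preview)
The paper does not attempt to prove this claim at all: it states explicitly that ``this claim is too vague to be proved rigorously'' and then \emph{only} supports it with a numerical experiment on the action-shared tree MDPs --- computing, for each state where $\policywinrate$ and $\policyscore$ disagree, the log-ratio of score variances $\Var_{\pi^+}(\rscore\mid s,\pi^+(s))$ versus $\Var_{\pi^+}(\rscore\mid s,\pi^-(s))$ along the relevant policy's own trajectories, and plotting the binned median against $\vwinrate_{\policywinrate}(s)$. Your proposal is therefore a genuinely different route: you try to \emph{derive} the variance preference analytically by reducing to the Gaussian bandit of Proposition~1, with numerics only as confirmation.

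That ambition is worthwhile, but as a proof it has real gaps. The Gaussian surrogate is the entire content of the argument, yet you only justify it heuristically (``central-limit-type approximation should sharpen with depth''); without a quantitative normal-approximation bound on the tails near zero, the inequality $\PP(X_{a^*}>0)\geq\PP(X_{a^\dagger}>0)$ does not translate into $\mu_{a^*}/\sigma_{a^*}\geq\mu_{a^\dagger}/\sigma_{a^\dagger}$, and small departures from normality can flip this comparison precisely in the near-tied regime that matters. Your proposed fix for the continuation-policy mismatch --- prove a one-step version under a fixed $\pi$, then propagate from the leaves --- does not close either: the interesting states are exactly those where $\policyscore$ and $\policywinrate$ differ below, so ``at levels where they coincide there is nothing to prove'' gives no inductive leverage, and at a state where they differ the two $q$-values you need to compare are still computed under different subtrees. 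You are right to flag the mixed-sign case as a genuine exclusion; the paper sidesteps all of this by treating the statement as an empirical regularity rather than a theorem.
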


Though this claim is too vague to be proved rigorously, we can support it with the following experiment, performed on the same simulated action-shared tree MDPs of the previous section.

\begin{enumerate}
	\item For assessing the preferences of the win/lose outcome-optimal policy, let $\pi^+:=\policywinrate$ and $\pi^-:=\policyscore$.
	\item For every state-action pair $(s,a)$, we compute the variance $\Var_{\pi^+}(\rscore|s,a)$, that is, the variance of the final score over trajectories following $\pi^+$ from $(s,a)$. This quantity can be computed efficiently as $\EE_{\pi^+}[\rscore^2|s,a]-(\qscore_{\pi^+}(s,a))^2$ by policy evaluation and value iteration.
	\item For every state $s$, let $y(s)$ be the difference of variance (in a log scale) between the actions preferred by $\pi^+(s)$ and $\pi^-(s)$, namely, $y(s):=\log(\Var_{\pi^+}(\rscore|s,\pi^+(s)))-\log(\Var_{\pi^+}(\rscore|s,\pi^-(s)))$.
	\item Consider only those states where $y(s)\neq 0$, that is, those states for which the action chosen by $\pi^+(s)$ and $\pi^-(s)$ actually differ. Draw a binned graph of the median of $y(s)$ against $\vwinrate_{\policywinrate}(s)$.  The median is computed across the states in the same $x$-axis bin in each MDP and across all generated MDPs.
	\item For assessing the preferences of the score-optimal policy restart from item 1, but with the role of $\pi^+$ and $\pi^-$ reversed: $\pi^+:=\policyscore$ and $\pi^-:=\policywinrate$.
\end{enumerate}

The results are depicted in Fig.~\ref{fig:variance_difference}, with $\policywinrate$ in red. The figure shows that when $\policywinrate$ chooses differently from $\policyscore$, if it is losing, then the chosen action will typically lead to a more uncertain game, with higher variance in the score. If it is winning, then the opposite happens. Moreover $\policyscore$, even computing variances following its own trajectories, agrees with the evaluation and typically chooses the opposite criterion.

\begin{figure}
        \centering
	\includegraphics[scale=0.5]{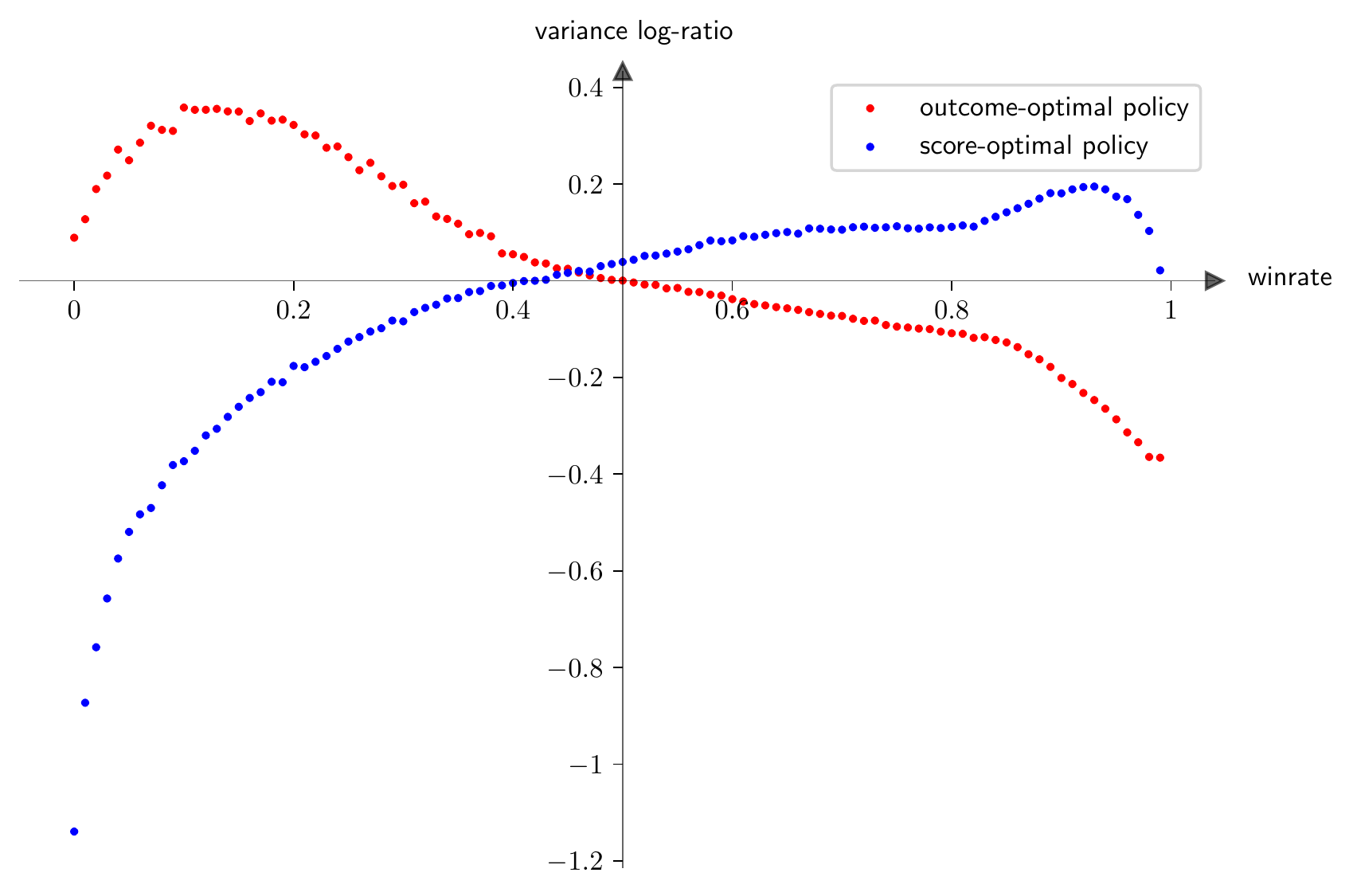}
	\caption{
	Log-ratio for the score variances of the chosen and discarded action, when the two policies do not agree (median over 100 bins). The two curves are from the point of view of either policy, with variances computed along their own trajectories. The $x$-axis is the best winrate, i.e.\ the outcome value from the point of view of the outcome-optimal policy.}\label{fig:variance_difference}
\end{figure}

\section{ \leelascore}
\label{sec:background}


Leela Zero (LZ)~\cite{LeelaZero} is an open-source Go program with no human-provided knowledge, known as one of the most faithful re-implementations of AlphaGo~Zero (AGZ)~\cite{AlphaGoZero}. For all intents and purposes, it is considered an open-source AGZ.

LZ was initially released on 25 October 2017. The neural networks powering the agent of LZ were trained by a distributed effort, which was coordinated at the LZ website: members of the community provided computing resources by running the client, which generates self-play games and submits them to the server.

The self-play games were used to train newer networks, which were then matched against the current best and possibly promoted according to a process known as gating.

The reinforcement learning pipeline of LZ ended on 15 February 2021. LZ is available at \url{https://zero.sjeng.org/} and \url{https://github.com/leela-zero/leela-zero}.

We developed \leelascore (\lzs) as a fork of LZ, suitably modified to change the reward from the outcome of the game to the score difference.  While Go is usually played on a \nineteen board, we trained \leelascore on the \nine board, which is still largely used for the game of Go and is of much simpler complexity.

SAI is a fork of LZ described in~\cite{sai7x7,SAI9x9extended,SAI9x9ECAI,SAIGitHub}. While LZ was trained on the \nineteen board only, SAI was trained on the \nine board, hence we choose SAI \nine as a benchmark to assess the strength of  \leelascore.  For the purposes of this work, SAI \nine can be considered an AGZ-like software.




\subsection{Setup of \leelascore}
\label{sec:lzs}

The overall architecture of LZ was replicated in \lzs, the main difference being in the quantity used as target and reward.  To use the integer score difference in the place of the Boolean game outcome, both during the training (as a target) and in the inference (as a reward), the following issues were addressed.
%
%

\begin{itemize}
    \item \textbf{Target for the neural network.}  The value head of LZ's neural network ends with a linear layer with a $\tanh$ activation and is trained against the outcome of the game in $\{-1,+1\}$, with an $l^2$ loss term; in \lzs with board size $N=9$, we removed the activation and changed the target to the score difference of the game, scaled by a factor $1/N^2$ in order to keep it inside $[-1,1]$.  The loss term was amplified by a factor of 20 to make its average size similar to the one of LZ.  The score difference was computed with Tromp-Taylor rules at the game completely finished (no resignation allowed).
    \item \textbf{Reward for the MCTS.}  For LZ, the exploration/exploitation tree search is driven by a UCT formula:
      \[
        a^*=\argmax_a(Q(a)+U(a))
        ,\quad
        Q(a)=\frac1{|V_a|}\sum_{i\in V_a}S(i)
      \]
      where $S(i)$ is the scaled score difference estimated by the value head for node $i$, $Q(a)$ is the mean action value over the set $V_a$ of the visited MCTS tree nodes that follow $a$, and $U(a)$ is the same as in~\cite{AlphaGoZero}, with the $c_{\text{puct}}$ constant raised from $1$ to $1.5$ to account for the larger variability of the scaled scores of siblings with respect to the winrate probabilities.
    \item \textbf{Game ending.} LZ employs a set of heuristics during self-plays to avoid uselessly long games. For example, most games are allowed to end by resignation, when the winrate drops below 5\%, and there are some hard-coded conditions for passing.  These heuristics do not work anymore in our new setting: to maximize the score, the agent should keep playing to the very end, so that the computationally amenable Tromp-Taylor score becomes equivalent to the proper area score of the game of Go.
\end{itemize}

Scaling down the board from size $N=19$ to size $\rho N=9$ with $\rho < 1$ yields several benefits.

\begin{itemize}
    \item Average number of legal moves at each position scales down by $\rho^2$. The number of good, meaningful moves might scale similarly.
    \item Average length of games scales down by $\rho^2$.
    \item The suggested number of visits in the MCTS tree might scale down by as much as $\rho^4$, as can be inferred from the previous two.
    \item The suggested number of layers in the residual convolutional neural network stack scales by $\rho$.
    \item The fully connected layers at the end of the neural network stack are smaller.
\end{itemize}

To summarize the performance benefits, the total speed improvement for self-play games can be estimated to be in the order of $\rho^9$, that is about $10^{-3}$. In fact, our resources allowed us to implement a pipeline with a larger number of visits.  We thus chose the same number of visits of \nine SAI, in order to grant a fair comparison.








\subsection{Reinforcement learning pipeline}
\label{sec:training}

Our training was composed of multiple phases, inspired by the original LZ training process, and by SAI well-documented pipeline.  In particular, we followed almost the same training procedure as in the first run of \nine~SAI~\cite{SAI9x9ECAI}, with the only exception that we reintroduced \emph{gating} (i.e.~the best network is replaced only if a stronger one emerges).  Gating was removed in the passage from AlphaGo Zero to AlphaZero, as it was seen as not being strictly necessary: since we are experimenting with score-based reinforcement learning, we considered it more conservative to keep it in our pipeline.  In fact, while we train the network to maximize its score, through gating we are assessing its capabilities in winning the game.  Each training cycle is called a generation, and it is composed of the following steps.
\begin{enumerate}
    \item \textbf{Self-play.} LZS engine uses the current best network to play a total of 2,000 \emph{self-play games} (against itself and sharing the tree search), with the following parameters: fixed number of visits $v$ depending on the generation (see the table below), random choice of the first 15 moves (with probability proportional to the visits number), Dirichlet noise at the root.  The agent is allowed to pass only if there is no other move with an expected score larger than the Tromp-Taylor score.
    \begin{center}
    \begin{tabular}{|c|c|c|c|c|c|}
    \hline
    generations & $v$ & $w$ & generations & $v$ & $w$ \\
    \hline
         0--15 & 100 & 4 &
        16--31 & 150 & 8 \\
        32--47 & 250 & 12 &
        48--63 & 400 & 16 \\
        64--79 & 600 & 20 &
        80--$\infty$ & 850 & 24 \\
    \hline
    \end{tabular}
    \end{center}

    \item \textbf{Training.} Starting from the current best network, new training is started over the self-play games of a window of the most recent $w$ generations (see the table above).  The number of steps adapts to the number of games inside the window, with a typical number of 30,000 steps. The batch size is 512.
    \item \textbf{Gating.} The newly trained network is matched against the current best, for a total of 400 games, with the same number of visits for self-play games and no randomness in the move choice.  The agent is allowed to pass when UCT selects this move.  If at least $55\%$ of the $400$ matches are won by the candidate network, it is promoted to be the new best network.
\end{enumerate}
All games were played with \emph{komi} $7.5$, that is, an additional $7.5$ points were added to white player's Tromp-Taylor score, to compensate for the fact that black plays the first move.

\subsection{Outcome of the learning pipeline}

The learning pipeline was guided by the intermediate results of the gating matches.  In fact, following LZ, at each generation we estimated the Elo rating of the new network from the 400 games in the gating match, anchoring arbitrarily the first random network to Elo 0.  This rating is clearly overrated (for example every promoted network is constrained to have at least 34.8 points, --corresponding to a 55\% winrate-- more than the previous one), and so we refer to it as \emph{uncalibrated Elo}.  This estimate was nonetheless adequate to assess when the learning pipeline stalled: when no new network overcame the gating for too many generations (empirically chosen), we scaled up the network size or decreased the learning rate of the training's optimizer.

Starting from a random $2\times64$ network (with 2 residual convolutional blocks of 64 filters), we moved to $4\times128$ at 2,000 games; to $8\times160$ at 154,000 games; to $10\times192$ at 748,000 games; to $12\times256$ at 982,000 games.  The learning rate started at 0.02 and decreased once to 0.002 at 850,000 games.
Fig.~\ref{fig:uncalibrated_elo} shows the pipeline results in terms of the uncalibrated Elo rating.




\begin{figure}[h]
    \centering
    \includegraphics[scale=0.5]{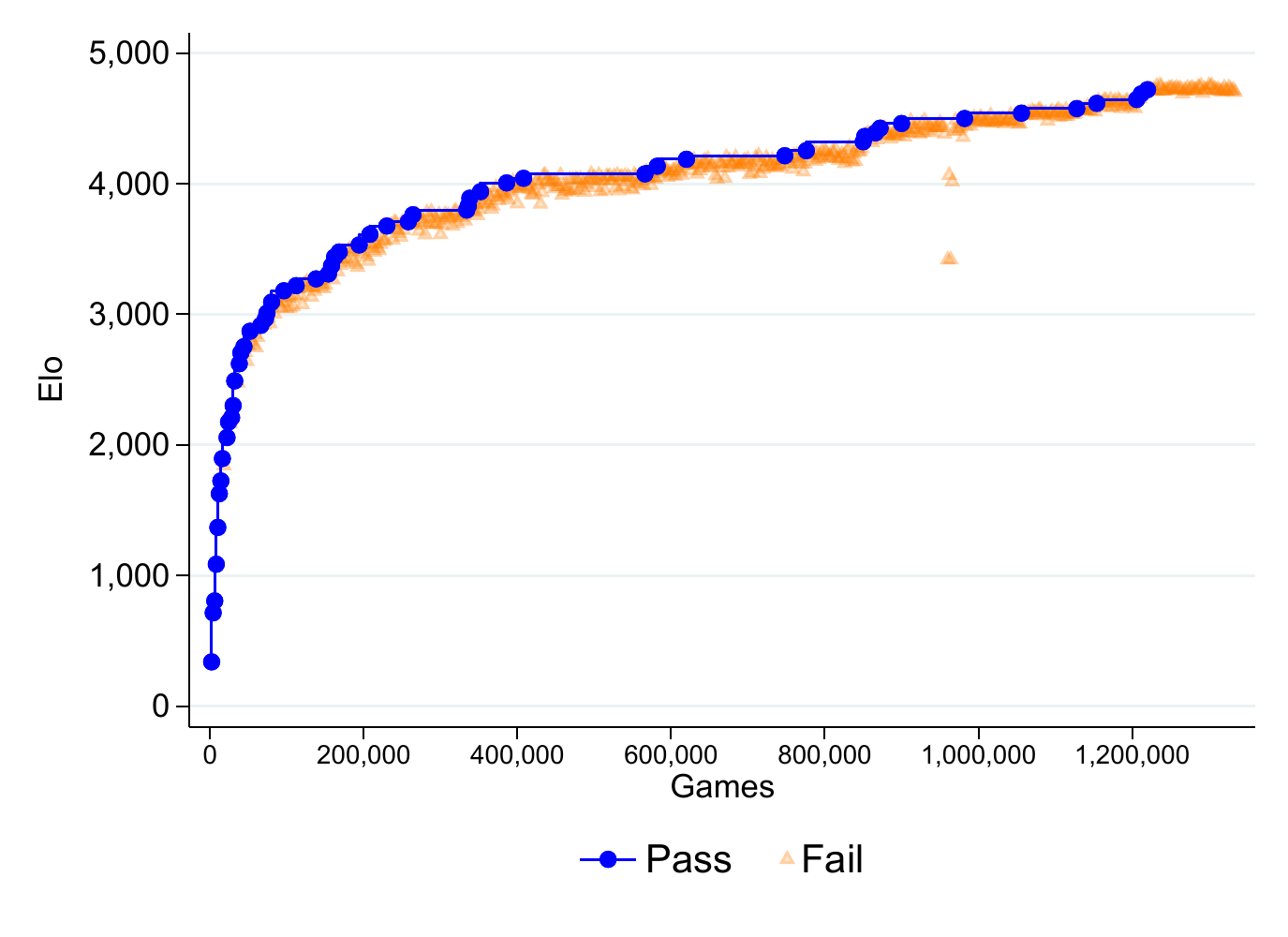}
    \caption{Uncalibrated Elo ratings of \lzs networks during the learning pipeline.  Blue circles and pink triangles represent networks that respectively passed and failed gating.  In this case, the rating of new networks was estimated only with the gating matches, yielding noisy and positively biased values.  See also Fig.~\ref{fig:calibrated_elo_sai29} for a more robust estimate.}
    \label{fig:uncalibrated_elo}
\end{figure}


\section{Conclusions}
\label{sec:conclusions}




In this paper, we discuss when and why maximizing the score in a score-based win/lose game can be suboptimal in terms of winrate. Since this cannot happen in a deterministic game, we start with the simplest nondeterministic case, that is, the multi-armed bandit. We prove that in the multi-armed bandit, a suboptimal behavior is related to the score variance and to whether the agent is winning or losing: when losing, the outcome-based agent prefers actions leading to a larger score variance, and vice versa.
We then define a class of MDPs modeling score-based win/lose games, and empirically show that this relation between suboptimal play and score variance is still valid. Finally, we train a score-based AlphaGo-like agent, and empirically show that it still behaves suboptimally, despite the fact that the game of Go is deterministic. This suggests that for certain problems, very complex deterministic games can be successfully modeled as nondeterministic, and provides sound and quantified evidence of the limitations of training a DRL score-based agent in a win/lose game, a folklore belief that we had not been able to find in the literature. 

\printbibliography

\end{document}